﻿\documentclass{article}

\usepackage[margin=1in]{geometry}
\usepackage{times}

\usepackage[authoryear,round]{natbib}
\usepackage{amsmath,amssymb,amsthm,mathtools}
\usepackage{graphicx}
\usepackage{subcaption}
\usepackage{booktabs}
\usepackage{array}
\usepackage{xcolor}
\usepackage{wrapfig}
\usepackage{enumitem}
\usepackage{hyperref}
\providecommand{\citep}{\cite}

\newtheorem{theorem}{Theorem}
\newtheorem{proposition}{Proposition}
\newtheorem{assumption}{Assumption}

\newcommand{\R}{\mathbb R}
\newcommand{\E}{\mathbb E}

\newcommand{\dist}{\mathrm{dist}}
\newcommand{\Law}{\mathrm{Law}}
\newcommand{\Lip}{\mathrm{Lip}}
\newcommand{\BL}{\mathrm{BL}}

\title{When Do Local Score Models Extrapolate Across Size?\\
A Diagnostic Theory and Benchmark}

\author{
Wenjie Xi\\
Department of Physics and HK Institute of Quantum Science \& Technology\\
The University of Hong Kong, Pokfulam Road, Hong Kong, China\\
\texttt{wjxi@connect.hku.hk}
}
\date{}

\begin{document}
\maketitle

\begin{abstract}

Scientific generative modeling often requires size transfer, where models trained on small systems are evaluated on larger ones. While translation-invariant architectures enable this evaluation, we show that architectural locality alone does not guarantee stable size extrapolation.
Instead, stable extrapolation is governed by the quasi-locality of the Gaussian-smoothed score. 
Through Tweedie's formula, far-away perturbations can influence local score components via posterior covariance, meaning a local model succeeds only if its receptive field covers the smoothed score's response range. 
We formalize this mechanism, proving a size-uniform comparison theorem for local marginals under reverse diffusion. 
We also introduce Finite-Depth Local Flow (FDLF), a white-box diagnostic benchmark with exact scores, densities, and controllable response ranges. 
Empirically, we validate the interplay between spatial mixing, smoothed-score quasi-locality, and model receptive fields. 
Under spatial mixing, the smoothed score remains quasi-local relative to the receptive field, enabling stable extrapolation. Conversely, when spatial mixing weakens, the score's locality rapidly degrades, causing size transfer to fail.

\end{abstract}

\section{Introduction}

Many scientific generative modeling problems require size transfer: a model is trained on small, accessible systems but is expected to operate on substantially larger ones. 
While translation-invariant architectures like local CNNs or graph neural networks make this evaluation possible at the implementation level, a fundamental question remains: when is the learned local score rule actually the correct physical rule after the domain grows?

Our answer is that stable size extrapolation is governed by the quasi-locality of the smoothed score. 
Denoising score matching learns Gaussian-smoothed scores \cite{hyvarinen2005score,vincent2011denoising,song2021score}, not merely the clean score. 
Even when the clean density has finite-range structure, smoothing rewrites the score as a posterior expectation through Tweedie's formula \cite{Robbins1992,Efron2011}; far-away perturbations can influence a local score component through posterior covariance. 
When the noisy posterior is spatially mixing \cite{dobrushin1968prescribing,georgii2011gibbs}, this influence decays and a finite receptive field can capture the target. 
When the smoothed score has substantial response beyond the receptive field, stable extrapolation should not be expected from architectural locality alone.

This insight leads to a testable diagnostic: a local score model should stably transfer across sizes when its receptive field covers the effective response range of the target smoothed score, and it becomes unreliable when it does not. 
This diagnostic is stronger than merely checking whether an architecture can be evaluated on larger systems; it asks whether the learned rule has captured the spatial response that actually determines the local reverse-diffusion dynamics. 
However, diagnosing this mechanism in practical scientific applications is highly challenging, as exact scores are typically unavailable and the data-generating rules of realistic systems are black-box.

To address these challenges, we make three main contributions. First, we formalize local score extrapolation and identify posterior covariance as the static object governing the smoothed-score response. 
Second, we prove a size-uniform local-marginal comparison theorem. 
Under the assumption of dynamic quasi-locality of the exact reverse process, we show that spatially weighted on-rollout score error controls fixed-patch marginals uniformly in system size.
Third, we introduce Finite-Depth Local Flow (FDLF), a white-box diagnostic benchmark with exact scores, exact densities, and controllable response ranges.
This allows us to systematically evaluate positive controls, receptive-field-limited cases, and controlled failures under a single protocol.

Empirically, our proposed diagnostic is predictive. 
On 2D continuous FDLF teachers, local scores extrapolate stably, with medium-range teachers showing clear improvement as the CNN receptive field expands, while long-range responses remain a controlled failure. 
This behavior persists in a more challenging mixed discrete-continuous setting with exact local validity constraints. 
Finally, a critical Ising stress test demonstrates the physical mechanism behind failure: as the posterior spatial mixing weakens near criticality, the smoothed-score response becomes long-ranged, and fixed-radius local extrapolation becomes unreliable.

\paragraph{Relation to prior work.}
\textbf{Learning across discretization and size.} Neural operators learn maps between function spaces and are designed for discretization or resolution transfer in PDE settings \citep{li2021fno,kovachki2023neuraloperator,lu2021deeponet}. This is related in spirit to size transfer, but the target is usually a deterministic solution operator rather than a probabilistic score field whose locality changes under Gaussian smoothing. A second line learns size-extensive or thermodynamic-limit predictors by decomposing energies or observables into local contributions, including neural-network interatomic potentials and extensive neural networks for large physical systems \citep{behler2007nn,mills2019ednn,schutt2017schnet,batzner2022nequip}. These works exploit locality to make energy or force prediction scalable. Our focus is different: even when a clean model is local, the denoising score at positive noise is controlled by the noisy posterior response, so size extrapolation depends on a learned score-response range, not only on extensivity or local energy decomposition.

\textbf{Local generative models, score theory, and statistical-mechanics limits.} Equivariant and local generative models provide powerful inductive biases for particles and molecular geometries, including equivariant graph networks and diffusion models \citep{satorras2021egnn,hoogeboom2022edm,xu2022geodiff,jing2022torsional}. These methods address symmetry and geometry; our diagnostic asks when a finite local rule remains valid as system size grows. Existing score-based theory gives convergence guarantees from accurate score estimates \citep{lee2022scoreconvergence,chen2023sampling}, whereas our theorem is local and size-uniform, emphasizing spatially weighted score error and response tails. The physical side is closest to finite-size scaling \citep{fisherbarber1972fss,privman1990fss}, correlation decay and spatial mixing for Gibbs measures and sampling \citep{dobrushin1968prescribing,georgii2011gibbs,weitz2006counting}, and local weak convergence of rooted finite graphs \citep{benjamini2001distributional,aldous2004objective}. We borrow the local-limit viewpoint but study a different object: the Gaussian-smoothed score learned by diffusion training and its measurable response range under finite receptive fields.

\section{Local Score Extrapolation}
\label{sec:theory_setup}

Let $\Lambda_L$ be a finite spatial domain and let $x=(x_i)_{i\in\Lambda_L}$ with $x_i\in\R^d$. For each system size $L$, let $p_L$ be a density on $(\R^d)^{|\Lambda_L|}$. In the size-extrapolation setting, a score model is trained on samples from $p_{L_{\rm train}}$ and then evaluated without retraining on larger systems $p_{L_{\rm test}}$, where $L_{\rm test}>L_{\rm train}$. The modeling assumption is that $\{p_L\}_L$ is generated by a shared size-uniform mechanism; only the domain size changes.

This assumption rules out a trivial obstruction. If the distribution itself changes qualitatively with $L$, no fixed local rule should be expected to extrapolate. Our focus is the more subtle case in which the underlying rule is shared across sizes, yet the learned local score may still be wrong because the score component at a site depends on a spatial context larger than the model can see. We therefore evaluate extrapolation locally: a model succeeds if fixed-size patches, bulk score components, and local response probes remain stable as $L$ increases.

For score-based diffusion learning \citep{sohldickstein2015nonequilibrium,song2019ncsn,ho2020ddpm,song2021score}, let $X_0\sim p_L$ and
\begin{equation}
X_\sigma=X_0+\sigma\varepsilon,\qquad \varepsilon\sim\mathcal N(0,I).
\label{eq:noisy_rv_def_lclr}
\end{equation}
The smoothed density is
\begin{equation}
p_{L,\sigma}(x)=\int p_L(x_0)\,\mathcal N(x;x_0,\sigma^2I)\,dx_0,
\label{eq:smoothed_density_lclr}
\end{equation}
with target score $s_{L,\sigma}(x)=\nabla_x\log p_{L,\sigma}(x)$. Denoising score matching \citep{vincent2011denoising} trains $s_\theta$ by
\begin{equation}
\mathcal L_{\rm DSM}(\theta)=
\E_{\sigma,X_0,\varepsilon}
\left[
\left\|s_\theta(X_0+\sigma\varepsilon,\sigma)+\frac{\varepsilon}{\sigma}\right\|_2^2
\right].
\label{eq:dsm_loss_lclr}
\end{equation}
At population optimum, this objective recovers the smoothed score $s_{L,\sigma}$ \citep{vincent2011denoising,song2021score}.

A local score model represents a size-independent site rule,
\begin{equation}
s_\theta(x,\sigma)_i=F_\theta(x_{B_R(i)},\sigma),
\label{eq:local_score_model_lclr}
\end{equation}
where $B_R(i)=\{j:\dist(i,j)\le R\}$. The same $R$ and $\theta$ are used for all system sizes. Thus, the relevant extrapolation question becomes: is the true score component at site $i$ effectively determined by information inside $B_R(i)$?

To make this question operational, define the response tail of a differentiable score field by
\begin{equation}
\mathcal T_{L,\sigma}(i;R,x)
=
\sum_{\dist(i,j)>R}
\left\|
\frac{\partial s_{L,\sigma}(x)_i}{\partial x_j}
\right\|_{\rm op}.
\label{eq:response_tail_lclr}
\end{equation}
If $\mathcal T_{L,\sigma}(i;R,x)$ is small uniformly over relevant $x$, then variables outside the receptive field have little first-order influence on the local score. If it is large, no radius-$R$ score model can represent the true response without additional global information. The experiments use finite-difference versions of this quantity as observable diagnostics.

\subsection{Clean and Smoothed Locality}
\label{subsec:score_locality_lclr}

For clean densities, strict locality follows from a local log-density decomposition. Suppose
\begin{equation}
\log p_L(x)=\sum_{j\in\Lambda_L}\phi_j(x_{B_r(j)})
\label{eq:clean_locality_decomposition_lclr}
\end{equation}
with interaction radius $r$ independent of $L$. Then $\nabla_{x_i}\log p_L(x)$ only receives contributions from terms whose neighborhoods contain $i$, and each such term depends on variables at distance at most $2r$ from $i$. Hence the clean score is finite-range local. This elementary statement is useful but insufficient for diffusion training, which targets smoothed scores.

For $\sigma>0$, Tweedie's formula \citep{Robbins1992,Efron2011} gives
\begin{equation}
s_{L,\sigma}(x)=\sigma^{-2}\bigl(\E[X_0\mid X_\sigma=x]-x\bigr),
\label{eq:tweedie_lclr}
\end{equation}
where the posterior over clean configurations is
\begin{equation}
\pi^x_{L,\sigma}(dx_0)\propto p_L(x_0)
\exp\!\left(-\frac{\|x-x_0\|_2^2}{2\sigma^2}\right)\,dx_0.
\label{eq:posterior_lclr}
\end{equation}
Consequently, differentiating the posterior mean gives the covariance, or linear-response, identity \citep{georgii2011gibbs}: for $i\ne j$,
\begin{equation}
\frac{\partial s_{L,\sigma}(x)_i}{\partial x_j}
=\sigma^{-4}\operatorname{Cov}_{\pi^x_{L,\sigma}}(X_{0,i},X_{0,j}).
\label{eq:posterior_cov_lclr}
\end{equation}
Thus the spatial response of the smoothed score is governed by posterior correlations. If the noisy posterior has uniform correlation decay, as in classical spatial-mixing regimes for Gibbs measures \citep{dobrushin1968prescribing,georgii2011gibbs}, the score Jacobian is quasi-local. If spatial mixing fails, the smoothed score can have substantial long-range response even when the clean prior has short-range interactions.

Equation~\eqref{eq:posterior_cov_lclr} is the conceptual hinge of the paper. It says that the effective receptive field required by a denoising score model is a property of the noisy posterior, not only of the clean interaction graph. Increasing CNN depth should help when it reduces the response tail in Eq.~\eqref{eq:response_tail_lclr}; it should not solve cases where the teacher still has large response beyond all tested radii. This is exactly the prediction tested by the receptive-field sweep in Section~\ref{sec:2d_lclr}.

\subsection{Local-Marginal Target and Size-Uniform Control}
\label{subsec:local_marginal_lclr}

The theory supports a deliberately local target. In an $L^d$-site system, global total variation, KL, or Wasserstein errors can scale with volume even when every fixed patch is generated accurately. Size extrapolation for local scientific structure is therefore better phrased as a thermodynamic-limit question: for every fixed patch size $m$, do the generated marginals on all $|A|\le m$ patches remain accurate uniformly over the ambient size $L$? Let $P_{L,A}$ be the target terminal marginal on $A\subseteq\Lambda_L$, let $\widehat Q_{L,A}$ be the learned terminal marginal, and measure local distribution error by the bounded-Lipschitz distance $d_{\BL}$, i.e., the largest discrepancy over bounded 1-Lipschitz test functions on the patch.

To connect this target to score learning, write the exact and learned reverse processes as
\begin{equation}
dX_t=b^L_t(X_t)\,dt+\sqrt2\,dW_t,\qquad
d\widehat X_t=\widehat b^L_t(\widehat X_t)\,dt+\sqrt2\,dW_t.
\label{eq:reverse_processes_lclr}
\end{equation}
The drift error $\widehat b^L_t-b^L_t$ is the score error up to the usual reverse-diffusion coefficients. For a patch observable $\phi(x_A)$, define the exact backward semigroup
\[
\mathcal P^L_{t,T}\phi(x)=\E[\phi(X_T)\mid X_t=x],
\]
the expected terminal value of $\phi$ under the exact reverse process initialized at state $x$ and time $t$. Although $\phi$ observes only patch $A$, the function $\mathcal P^L_{t,T}\phi$ may depend on sites outside $A$ because errors can propagate backward through the reverse dynamics. The dynamic quasi-locality condition says this propagated sensitivity has a summable spatial envelope:
\begin{equation}
\|\nabla_i \mathcal P^L_{t,T}\phi\|
\le C_{\rm loc}(m,T)\,\Lip(\phi)\,\omega^L_{A,t}(i),
\qquad
\sup_{L,A,t}\sum_i \omega^L_{A,t}(i)<\infty .
\label{eq:dynamic_quasilocal_lclr}
\end{equation}
Here $\Lip(\phi)$ is the Lipschitz constant of the patch test function, $\omega^L_{A,t}(i)$ is an influence weight measuring how much a drift error at site $i$ can affect patch $A$, and $C_{\rm loc}(m,T)$ is independent of the ambient size $L$. Thus Eq.~\eqref{eq:dynamic_quasilocal_lclr} is a finite-speed-of-influence condition for local marginals: the exact reverse dynamics may spread information, but the total influence budget around a fixed patch remains uniformly bounded.

This turns global score error into a spatially weighted on-rollout error around the target patch:
\begin{equation}
\eta_m=
\sup_{L,|A|\le m}
\int_0^T
\E_{\widehat\nu^L_t}
\sum_i
\omega^L_{A,t}(i)
\|\widehat b^L_{t,i}-b^L_{t,i}\|\,dt,
\label{eq:eta_m_lclr}
\end{equation}
where $\widehat\nu^L_t$ is the law of the learned process at time $t$. In words, $\eta_m$ averages the site-wise drift/score error along learned rollouts, but weights each site by how much it can influence the target patch. Score errors far outside the response cone are therefore irrelevant to this local metric, while errors inside the cone are counted uniformly over system size. The appendix proves the following conditional comparison theorem.

\begin{theorem}[Informal size-uniform local-marginal control]
\label{thm:informal_lclr}
Fix $m<\infty$. Under well-posedness, initialization consistency, and dynamic quasi-locality of the exact reverse process, the learned terminal local marginals satisfy
\begin{equation}
\sup_L\sup_{\substack{A\subseteq\Lambda_L\\ |A|\le m}}
d_{\BL}(\widehat Q_{L,A},P_{L,A})
\le \delta_{0,m}+C_{\rm loc}(m,T)\eta_m.
\label{eq:main_theorem_lclr}
\end{equation}
Here $\delta_{0,m}$ is the initial local mismatch, $C_{\rm loc}(m,T)$ is independent of $L$, and $\eta_m$ is defined in Eq.~\eqref{eq:eta_m_lclr}.
\end{theorem}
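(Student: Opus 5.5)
The plan is a standard Alekseev--Gröbner / semigroup interpolation argument, made local by the dynamic quasi-locality bound in Eq.~\eqref{eq:dynamic_quasilocal_lclr}. Fix $L$, a patch $A$ with $|A|\le m$, and a bounded $1$-Lipschitz test function $\phi$ on $(\R^d)^A$; extend it to the full configuration as $\phi(x_A)$. The goal is to bound $\E\phi(\widehat X_T)-\E\phi(X_T)$ by a constant independent of $L$, $A$ and $\phi$. Taking the supremum over such $\phi$ then gives $d_{\BL}(\widehat Q_{L,A},P_{L,A})$, and taking the supremum over $A$ and $L$ gives Eq.~\eqref{eq:main_theorem_lclr}.

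\textbf{Step 1 (split the error).} Write $u(t,x)=\mathcal P^L_{t,T}\phi(x)$. By the Markov property, $\E\phi(X_T)=\E\,u(0,X_0)$, and trivially $\E\phi(\widehat X_T)=\E\,u(T,\widehat X_T)$. This gives the decomposition
\[
\E\phi(\widehat X_T)-\E\phi(X_T)
=\bigl(\E\,u(0,\widehat X_0)-\E\,u(0,X_0)\bigr)
+\bigl(\E\,u(T,\widehat X_T)-\E\,u(0,\widehat X_0)\bigr).
\]
The first bracket is the initialization mismatch. I would take ``initialization consistency'' to mean precisely that this bracket is at most $\delta_{0,m}$ uniformly in $L$, $A$ and $\phi$. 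One concrete sufficient condition is that the initial laws agree on the relevant patches, or are coupled with a weighted error measured against $\omega^L_{A,0}$.

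\textbf{Step 2 (Itô along the learned path).} Well-posedness should supply that $u$ is $C^{1,2}$ and solves the backward Kolmogorov equation $\partial_t u+b^L_t\cdot\nabla u+\Delta u=0$. Applying Itô's formula to $u(t,\widehat X_t)$, the generator of the learned process differs from the exact one only through the drift. Hence
\[
\E\,u(T,\widehat X_T)-\E\,u(0,\widehat X_0)=\int_0^T\E\bigl[(\widehat b^L_t-b^L_t)(\widehat X_t)\cdot\nabla u(t,\widehat X_t)\bigr]dt .
\]
The martingale term must vanish. This requires a local-martingale-to-martingale argument, using the gradient bound together with integrability of the drift error, and I would list it under well-posedness. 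Writing the inner product site by site and applying Eq.~\eqref{eq:dynamic_quasilocal_lclr} with $\Lip(\phi)\le1$ bounds the integrand by $C_{\rm loc}(m,T)\sum_i\omega^L_{A,t}(i)\|\widehat b^L_{t,i}-b^L_{t,i}\|$ evaluated under $\widehat\nu^L_t$. Integrating in $t$ and taking suprema gives exactly $C_{\rm loc}(m,T)\eta_m$.

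\textbf{Main obstacle.} The algebra above is routine. The delicate part is justifying Step 2 in infinite-dimensional-in-spirit, size-uniform form: regularity of $u$ for each finite $L$, and the vanishing of the stochastic integral. Above all, I must make sure the constant never picks up a hidden $L$-dependence through a global Lipschitz or Grönwall bound. The key point is that no Grönwall argument on the full state is needed. All spatial propagation is absorbed into the assumed envelope $\omega$, whose uniform summability is an assumption of the theorem rather than a result to prove. I would therefore state the theorem conditionally: finite-$L$ well-posedness, a $C^{1,2}$ semigroup with a gradient bound, integrable drift error, and Eq.~\eqref{eq:dynamic_quasilocal_lclr}. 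I would also remark that these are verifiable in mixing regimes, for example by a Dobrushin-type contraction of the Jacobian flow.
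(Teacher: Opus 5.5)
Your proposal is correct and follows essentially the same route as the paper. The paper also applies It\^o's formula to $u(t,\widehat X_t)$ with $u=\mathcal P^L_{t,T}\phi$ solving the backward Kolmogorov equation, splits off the initial mismatch $|\E_{\widehat\rho^L_0}[\mathcal P^L_{0,T}\phi]-\E_{\rho^L_0}[\mathcal P^L_{0,T}\phi]|\le\delta_{0,m}$, and bounds the drift-difference term sitewise by $C_{\rm loc}(m,T)\sum_i\omega^L_{A,t}(i)\|\widehat b^L_{t,i}-b^L_{t,i}\|$ before integrating in time to get $\eta_m$. The paper packages the first part as a separate dual-generator proposition and, like you, puts the regularity and martingale issues into a backward-regularity assumption. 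The only detail you leave implicit is that $u$ is not $C^{1,2}$ up to $t=T$ when $\phi$ is merely Lipschitz, so your Step 2 needs a smooth $\phi$. The paper handles this by proving the bound for smooth $\phi\in\BL_1(A)$ first and then extending to all bounded-Lipschitz test functions by mollification.
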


The proof idea is the bridge to the experiments. Apply the backward Kolmogorov equation to $\mathcal P^L_{t,T}\phi$ and It\^o's formula along the learned process. The terminal patch error equals an initial mismatch plus an integral of the drift error against $\nabla_i\mathcal P^L_{t,T}\phi$. Dynamic quasi-locality weights this gradient by $\omega^L_{A,t}$, so only score errors inside the response cone of the patch matter. Thus the theorem says: if the exact reverse dynamics are quasi-local and the learned score is accurate in the spatial region that can influence the patch, then local marginals are controlled uniformly over $L$. The response-tail diagnostics in the experiments test whether this sufficient mechanism is plausible for a given teacher and receptive field.

\section{FDLF: A Testable Diagnostic Benchmark}
\label{sec:fdlf_lclr}

Realistic scientific datasets are essential for applications, but they are poor instruments for isolating size extrapolation mechanisms: exact scores are unavailable, the data-generating rule is unknown, and the score-response range cannot be independently controlled. We therefore introduce Finite-Depth Local Flow (FDLF) as a \emph{testable diagnostic benchmark}. Its purpose is not to maximize realism, but to make the locality mechanism observable.

FDLF is built around three requirements that standard datasets usually cannot satisfy simultaneously. First, the teacher must expose exact clean scores, so score error can be measured directly rather than inferred from samples. Second, the same teacher rule must be reusable at every $L$, so size extrapolation is not confounded with a change in data distribution. Third, the teacher's response range must be tunable, so we can create positive controls, intermediate cases, and controlled failures under one protocol. These requirements make FDLF a diagnostic instrument rather than a realism benchmark.

\begin{table}[t]
\centering
\small
\begin{tabular}{lll}
\toprule
\textbf{Need for diagnosis} & \textbf{FDLF provides} & \textbf{Role in this paper} \\
\midrule
Target score access & exact density and clean score & direct score extrapolation error \\
Size-uniform mechanism & same local map for every $L$ & isolates domain-size transfer \\
Controlled locality & tunable depth/range/smoothing & positive and negative controls \\
Response inspection & exact or finite-difference probes & tests the receptive-field hypothesis \\
\bottomrule
\end{tabular}
\caption{
Why FDLF is useful as a diagnostic benchmark. The benchmark is designed to expose the mechanism behind local size extrapolation, not to serve as a black-box application dataset.
}
\label{tab:fdlf_diagnostic_lclr}
\end{table}

Following the normalizing-flow change-of-variables framework \citep{rezende2015flows,papamakarios2021flows}, let $u=(u_i)_{i\in\Lambda_L}$ be a latent field with tractable density $q_L$, and let $T_L$ be a size-uniform finite-depth local invertible map. The target field is
\begin{equation}
x=T_L(u),\qquad p_L=(T_L)_\#q_L.
\label{eq:fdlf_transform_lclr}
\end{equation}
Its density and clean score are
\begin{equation}
p_L(x)=q_L(T_L^{-1}x)\left|\det DT_L^{-1}(x)\right|,
\qquad
s_{L,0}(x)=\nabla_x\log p_L(x).
\label{eq:fdlf_score_lclr}
\end{equation}

All concrete FDLF teachers used here employ coupling-style invertible constructions \citep{dinh2017realnvp} whose inverse and log-Jacobian remain local or controlled-range. By varying the depth, coupling range, and smoothing kernel, we obtain positive controls with short response ranges, medium-range teachers whose responses are captured only by larger CNNs, and long-range stress teachers. The short- and medium-range teachers reuse the same finite-range rule across $L$. The long-range stress family uses the same fixed parameters but includes an exact mean-mode component whose strength is \(\gamma(L)=0.005L^{3/2}\), making it a deliberately non-quasi-local negative control.

This benchmark design also clarifies what is being claimed. We do not claim that FDLF itself is a realistic scientific data model. Instead, we claim that any proposed mechanism for local size extrapolation should pass controlled tests of this kind: it should succeed when the teacher response is inside the receptive field, improve when the receptive field is enlarged to cover an intermediate response range, and fail or degrade when the teacher response remains outside the field. The following sections instantiate exactly these cases.

\section{Controlled FDLF Experiments}
\label{sec:fdlf_experiments_lclr}

\subsection{2D Continuous Diagnostic}
\label{sec:2d_lclr}

We first study periodic 2D continuous lattices, the cleanest setting in which the receptive-field hypothesis can be tested. The fixed-architecture size sweep trains only at \(L_{\rm train}=16\); the receptive-field sweep trains at \(L_{\rm train}=32\), so even the largest tested CNN radius \(R_{\rm eff}=14\) remains below half the training lattice width and does not become a wrapped global model. All score models in this section are pure CNNs with finite receptive fields; increasing the residual-block count increases the effective radius while preserving the same local convolutional inductive bias. A single \(3\times3\) convolution expands the square lattice support by one site in Chebyshev distance. In our CNN, a model with \(B\) residual blocks has one input projection, two \(3\times3\) convolutions per residual block, and one output projection, so its effective radius is \(R_{\rm eff}=2B+2\). The primary diagnostic is the low-noise clean-score relative RMSE
\begin{equation}
\mathrm{RMSE}_{\rm score}(L)=
\left(
\frac{\E_{x\sim p_L}\|s_\theta(x,\sigma_{\min})-s_{L,0}(x)\|_2^2}
{\E_{x\sim p_L}\|s_{L,0}(x)\|_2^2}
\right)^{1/2}.
\label{eq:score_rmse_lclr}
\end{equation}
Although DSM targets $s_{L,\sigma}$, this low-noise clean-score metric is a controlled diagnostic of extrapolation trends. We also perturb local and distant regions and compare finite-difference score responses,
\begin{equation}
\Delta s_{L,0}=s_{L,0}(x+\delta)-s_{L,0}(x),\qquad
\Delta s_\theta=s_\theta(x+\delta,\sigma_{\min})-s_\theta(x,\sigma_{\min}).
\label{eq:response_lclr}
\end{equation}

The teacher families are chosen to separate three hypotheses. A short-range simple teacher should be solved by even a small receptive field and should remain stable when evaluated at larger \(L\). A short-range structured teacher adds local nonlinear structure while retaining short response range. A medium-range teacher should be difficult for small CNNs but improve as \(R_{\rm eff}\) grows. A long-range stress teacher adds both a wide finite-range smoother and an exact global mean mode, so a fixed local CNN should degrade as the lattice grows. The fixed-size experiment compares the two short-range teachers to the long-range stress teacher; the receptive-field sweep then replaces the short-range structured teacher by the medium-range teacher to test radius-dependent improvement.

\begin{figure}[t]
\centering
\begin{subfigure}[t]{0.48\linewidth}
\centering
\includegraphics[width=\linewidth]{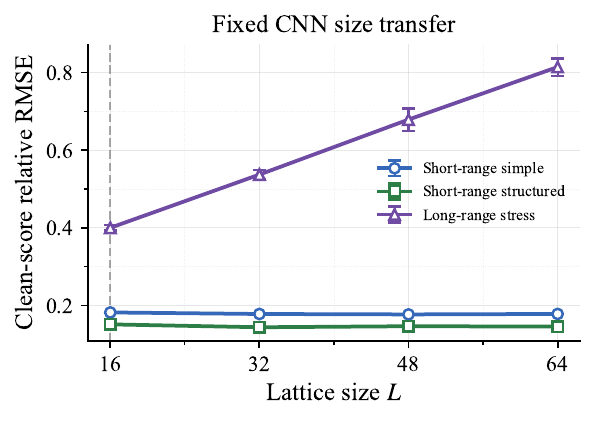}
\caption{Clean-score relative RMSE.}
\label{fig:2d_size_score_lclr}
\end{subfigure}
\hfill
\begin{subfigure}[t]{0.48\linewidth}
\centering
\includegraphics[width=\linewidth]{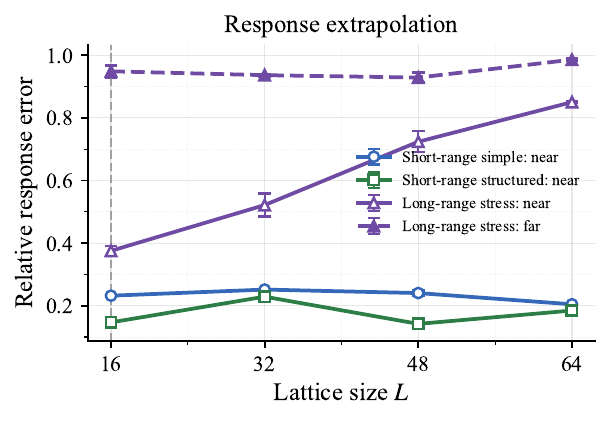}
\caption{Response error.}
\label{fig:2d_size_response_lclr}
\end{subfigure}
\caption{
Fixed-architecture pure-CNN size extrapolation on the 2D FDLF benchmark, aggregated over three independent training seeds. Models use \(B=1\) residual block, hence \(R_{\rm eff}=4\), are trained only at $L=16$, and are evaluated at larger sizes. Short-range simple and short-range structured teachers show stable score and near-response errors across $L$, while the long-range stress teacher has substantially larger error and degrades under the same protocol. Error bars are s.e.m. across training seeds.
}
\label{fig:2d_size_lclr}
\end{figure}

Figure~\ref{fig:2d_size_lclr} establishes the basic size-transfer phenomenon. With a fixed local CNN trained at $L=16$, the two short-range teachers maintain nearly constant normalized score error as the lattice grows, indicating that the learned rule transfers rather than merely memorizing the training size. The long-range stress teacher is different: its score error increases from \(0.400\pm0.007\) at \(L=16\) to \(0.814\pm0.022\) at \(L=64\), and its far-response error remains large. This figure answers the first question: stable size extrapolation is possible in the controlled local cases and fails in a controlled nonlocal case. Figure~\ref{fig:cnn_rf_sweep_lclr} then asks the sharper mechanistic question: whether enlarging the CNN receptive field improves the cases whose response range is intermediate.

\begin{figure}[t]
\centering
\includegraphics[width=0.98\linewidth]{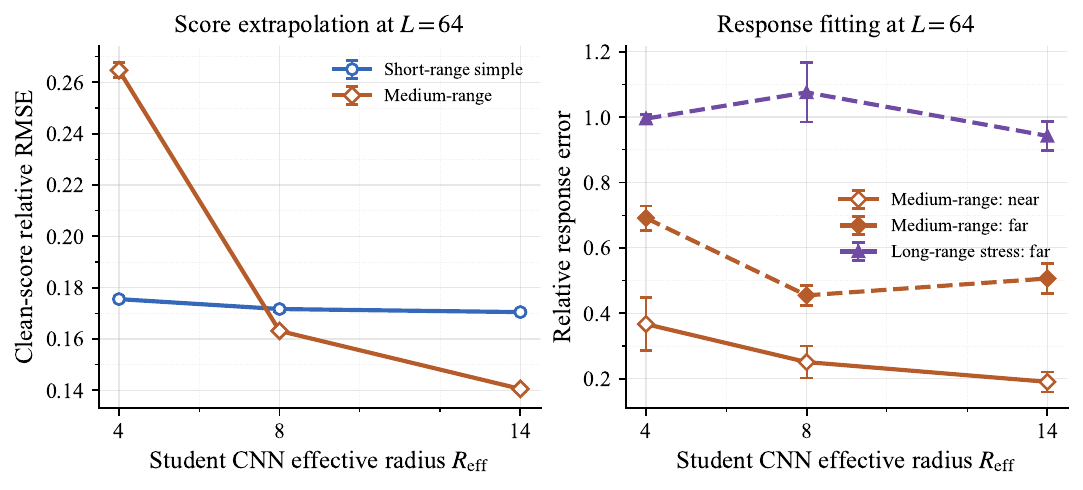}
\caption{
Pure-CNN receptive-field sweep on the 2D FDLF benchmark, aggregated over three independent training seeds. Models are trained at \(L=32\) and evaluated at \(L=64\). The \(x\)-axis is the effective CNN radius induced by the number of \(3\times3\) convolutional residual blocks. Left: the short-range simple teacher is stable even at small radius, while the medium-range teacher improves substantially as the receptive field grows. Right: the same medium-range teacher shows decreasing near/far response error, whereas the long-range stress teacher retains high far-response error. Error bars are s.e.m. across training seeds.
}
\label{fig:cnn_rf_sweep_lclr}
\end{figure}

Figure~\ref{fig:cnn_rf_sweep_lclr} is the central empirical diagnostic. The short-range simple teacher is already captured by the smallest CNN radius and remains essentially flat at \(L=64\), with clean-score relative RMSE \(0.176\pm0.001\) at \(R_{\rm eff}=4\) and \(0.170\pm0.0005\) at \(R_{\rm eff}=14\). The medium-range teacher begins with larger extrapolation error, but its clean-score RMSE decreases from \(0.265\pm0.003\) to \(0.141\pm0.001\) as the CNN effective radius increases from \(4\) to \(14\). This trend is the desired positive control: increasing local capacity helps when it covers the teacher's response range.

The response panel tests the same mechanism more directly. For the medium-range teacher, near-field response error decreases from \(0.367\pm0.081\) to \(0.190\pm0.031\), while far-field response error drops from \(0.691\pm0.038\) to \(0.506\pm0.046\). By contrast, the long-range stress teacher retains high far-response error across the sweep, remaining near one even at \(R_{\rm eff}=14\) (\(0.942\pm0.044\)). This negative control is important: it prevents the conclusion from being ``deeper CNNs extrapolate better'' in general. The more precise conclusion is that larger local receptive fields help when they cover the teacher's effective response range, and not otherwise.

We therefore use the 2D FDLF experiment as a falsifiable test of the proposed mechanism. The same training size and evaluation size produce three qualitatively different outcomes depending on teacher response range: stable success, radius-dependent improvement, and persistent failure. This is the experimental signature the theory predicts.

As an additional generation-level sanity check, Appendix~\ref{app:generation_local_observables} reports local observable errors from samples of the learned reverse process. These diagnostics are not a substitute for full distributional evaluation, but they verify that the score-level split also appears in local generated moments and correlations.

\subsection{Hard-Valid Mixed Discrete-Continuous Check}
\label{sec:hard_valid_lclr}

We next use a more structured FDLF instance as a robustness check, not as a separate main story. The purpose is to ask whether the 2D conclusion still holds when the state space includes exact local validity constraints and mixed variables. We use a 3D cubic lattice with open boundaries. Each site carries a discrete port-constraint state $b_i$, a discrete type label $a_i$, and a continuous variable $c_i\in\R^{d_c}$:
\begin{equation}
x_i=(b_i,a_i,c_i).
\label{eq:mixed_state_lclr}
\end{equation}
The discrete states are hard-valid by construction: local port constraints match across neighboring sites and cannot point outside the lattice. Empty sites carry deterministic null labels and null continuous coordinates. Given $(b,a)$, a conditional finite-depth local flow $T_L(z;b,a)$ generates the continuous channel and gives exact conditional density and score
\begin{equation}
s_{L,0}(b,a,c)=\nabla_c\log p_L(c\mid b,a).
\label{eq:conditional_score_lclr}
\end{equation}
We do not learn Euclidean scores for discrete variables; the model is evaluated on the exact conditional continuous score. The protocol mirrors the 2D experiment: train at a small size, evaluate on larger lattices, and compare teachers with different response ranges. This setting adds open boundaries, local validity constraints, and conditional continuous densities, while preserving the same diagnostic question.

\begin{figure}[t]
\centering
\includegraphics[width=0.95\linewidth]{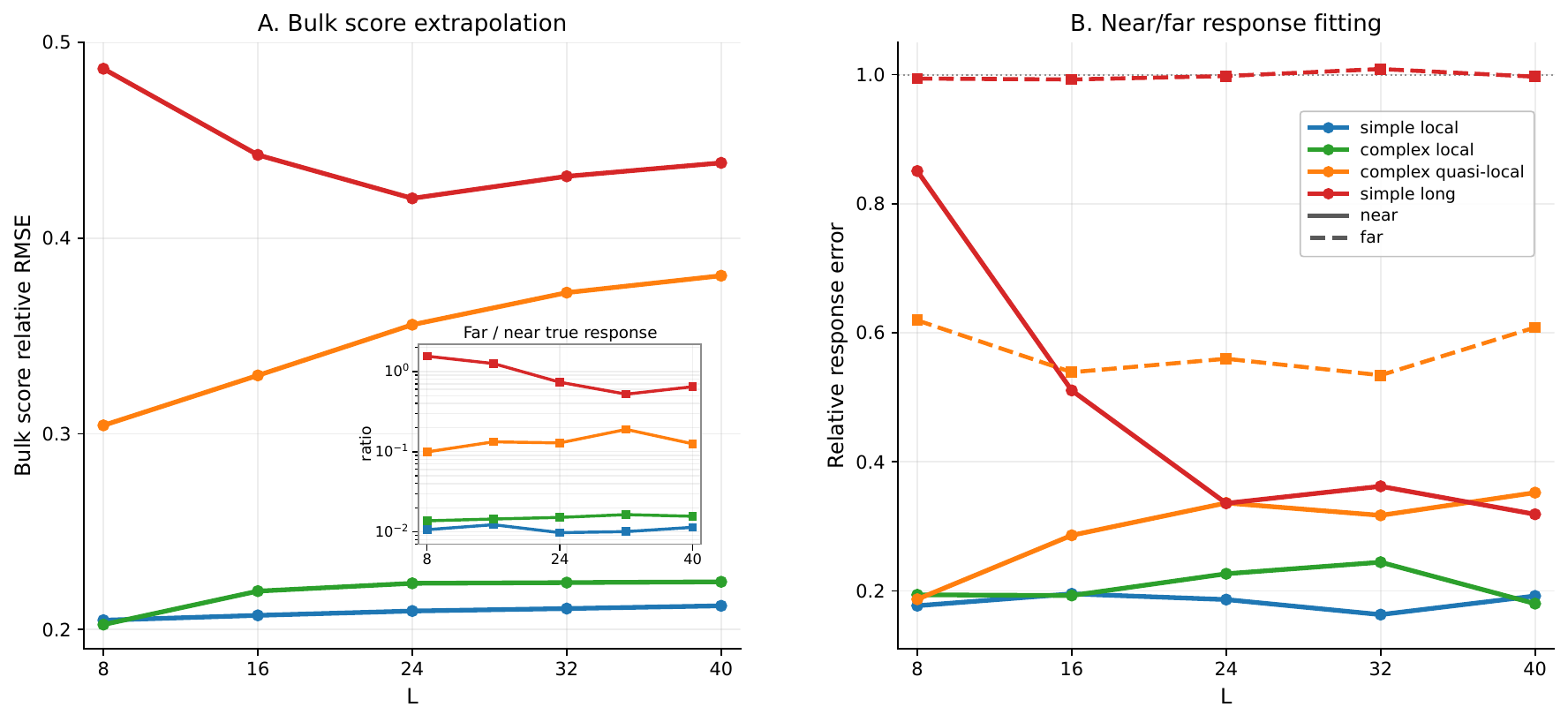}
\caption{
Hard-valid mixed discrete-continuous check. Models are trained at $L=8$ and evaluated up to $L=40$. Local teachers remain stable; teachers with larger far-field response have poorer or degrading score and response errors. The inset reports the teacher's own far-to-near response ratio.
}
\label{fig:hard_valid_lclr}
\end{figure}

Figure~\ref{fig:hard_valid_lclr} shows that the Section~\ref{sec:2d_lclr} story persists in this more structured case. Teachers whose response is concentrated near the reference site have stable bulk score RMSE across sizes. Teachers with larger far-to-near response ratios remain harder to extrapolate. Thus exact local validity constraints are compatible with local score transfer, but they do not remove the need to cover the target score-response range.

The qualitative structure of this section is not specific to CNN parameterizations: auxiliary pure sliding-attention local score models can exhibit the same stable, receptive-field-limited, and failing regimes, although the concrete relative-RMSE and response-error values naturally differ because the parameterization and effective local geometry differ.

\section{Critical-Posterior Stress Test}
\label{sec:ising_lclr}

The controlled FDLF experiments vary response range by construction. We now show the same principle in a classical physical mechanism: failure of posterior spatial mixing. This section is a mechanism stress test rather than a third full extrapolation benchmark. Its role is to show that even a clean model with only nearest-neighbor interactions can induce a long-ranged smoothed score when the noisy posterior approaches criticality. Consider the 2D nearest-neighbor Ising model \citep{onsager1944ising,georgii2011gibbs}
\begin{equation}
p_\beta(x)\propto \exp\!\left(\beta\sum_{\langle i,j\rangle}x_ix_j\right),
\qquad x_i\in\{-1,+1\},
\label{eq:ising_prior_lclr}
\end{equation}
observed through $Y_i=X_i+\sigma\epsilon_i$. The noisy posterior is
\begin{equation}
\pi_\beta(x\mid y)\propto
\exp\!\left(\beta\sum_{\langle i,j\rangle}x_ix_j+\sigma^{-2}\sum_i y_i x_i\right).
\label{eq:ising_posterior_lclr}
\end{equation}
At $y=0$, this posterior is the zero-field Ising model. As $\beta$ approaches the square-lattice critical value $\beta_c=\frac12\log(1+\sqrt2)$ \citep{onsager1944ising}, the posterior correlation length grows. By Eq.~\eqref{eq:posterior_cov_lclr},
\begin{equation}
\frac{\partial s_{\sigma,i}(0)}{\partial y_j}
=\sigma^{-4}\operatorname{Cov}_{p_\beta}(X_i,X_j),
\qquad i\ne j.
\label{eq:ising_score_response_lclr}
\end{equation}
Thus the smoothed-score response inherits critical correlations even though the clean model has only nearest-neighbor interactions. In terms of Eq.~\eqref{eq:response_tail_lclr}, the response tail grows because posterior covariance decays more slowly. This is the converse of the positive quasi-local case: as the posterior correlation length exceeds the model's effective receptive field, fixed-radius local extrapolation should become unreliable.

\begin{figure}[t]
\centering
\includegraphics[width=0.92\linewidth]{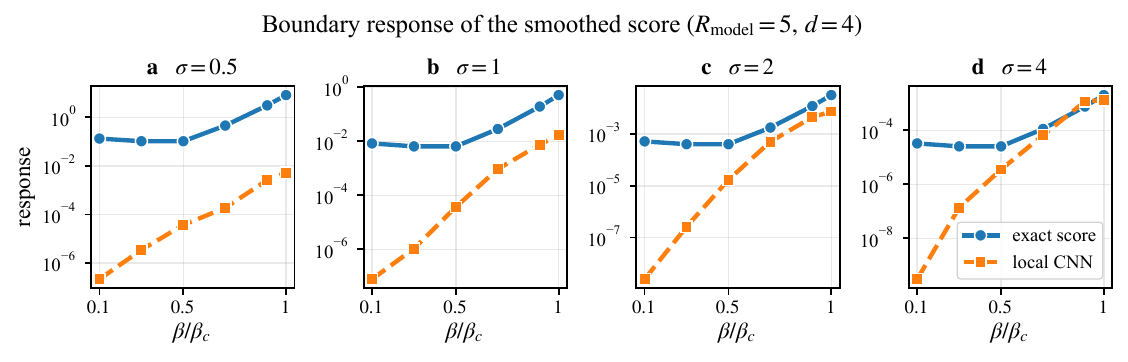}
\caption{
Boundary response of the smoothed score near the Ising critical point. A local CNN with radius $R=5$ is evaluated at distance $d=4$, inside its nominal receptive field. The exact response grows strongly as $\beta/\beta_c\to1$, and the learned response underestimates it, especially at intermediate noise scales.
}
\label{fig:ising_lclr}
\end{figure}

This stress test sharpens the main message. Short-range clean interactions do not guarantee short-range smoothed-score response at all noise levels. In Figure~\ref{fig:ising_lclr}, the exact boundary response grows as $\beta/\beta_c$ approaches one, while the learned local CNN underestimates the effect even at a probe distance inside its nominal receptive field. The plot should therefore be read as evidence about the response mechanism, not as a standalone size-transfer benchmark. The relevant condition for stable size extrapolation is quasi-locality of the smoothed score, not merely locality of the clean interaction graph.

\section{Discussion}

The main conclusion is simple: size-compatible architectures are not enough, but quasi-local smoothed scores make stable local size extrapolation plausible and testable. The theory identifies the relevant object, the spatial response of the smoothed score; FDLF makes that response measurable and controllable; receptive-field sweeps verify the predicted transition from stable transfer to receptive-field-limited failure.

This gives a practical standard for evaluating local scientific diffusion models. A claim of size extrapolation should show that the data-generating rule is shared across sizes, the target score response is local or quasi-local, and the model receptive field covers that response. Under these checks, local score rules can transfer across size; when the smoothed-score response remains outside the model's field, local extrapolation should be treated as unreliable rather than assumed from architectural locality alone.

\section*{Acknowledgments}

This work was supported by Research Grants Council of Hong Kong (GRF 17311322 and CRF C7012-21GF) and National Natural Science Foundation of China (Grant No. 12222416).

\appendix

\section{Theory Details and Proofs}
\label{app:theory_proofs}

This appendix makes the mathematical claims used in Section~\ref{sec:theory_setup} explicit. The first part is a static statement about the score field at a fixed noise level. The second part is a dynamic comparison theorem for reverse diffusion, showing why spatially weighted score error is the right local extrapolation target.

\subsection{Clean and Smoothed Score Locality}
\label{app:score_locality_proofs}

\begin{proposition}[Local log-density implies local clean score]
\label{prop:clean_local_score_appendix}
Suppose that for every \(L\),
\[
    \log p_L(x)=\sum_{j\in\Lambda_L}\phi_j(x_{B_r(j)}),
\]
where \(r\) is independent of \(L\). Then the clean score component
\[
    s_{L,0}(x)_i=\nabla_{x_i}\log p_L(x)
\]
depends only on \(x_{B_{2r}(i)}\), up to the usual boundary convention. In particular, its dependence radius is independent of \(L\).
\end{proposition}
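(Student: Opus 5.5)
The plan is to differentiate the decomposition term by term and track supports. Assume each \(\phi_j\) is differentiable, so that \(\log p_L\) is differentiable. Fix a site \(i\) and write
\[
\nabla_{x_i}\log p_L(x)=\sum_{j\in\Lambda_L}\nabla_{x_i}\phi_j(x_{B_r(j)}).
\]
The first step is to observe that \(\phi_j\) depends only on the coordinates \(x_{B_r(j)}\). Hence \(\nabla_{x_i}\phi_j\equiv 0\) whenever \(i\notin B_r(j)\), that is, whenever \(\dist(i,j)>r\). The sum therefore collapses to the finite index set \(J_i=\{j:\dist(i,j)\le r\}=B_r(i)\), using symmetry of \(\dist\).

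The second step is to bound the support of each surviving term. For \(j\in J_i\), the function \(\nabla_{x_i}\phi_j(x_{B_r(j)})\) is a function of \(x_{B_r(j)}\) only. For \(k\in B_r(j)\), the triangle inequality gives
\[
\dist(i,k)\le \dist(i,j)+\dist(j,k)\le 2r,
\]
so \(B_r(j)\subseteq B_{2r}(i)\). Every summand is therefore a function of \(x_{B_{2r}(i)}\), and so is their finite sum. This shows that \(s_{L,0}(x)_i\) depends only on \(x_{B_{2r}(i)}\). The radius \(2r\) involves neither \(L\) nor \(|\Lambda_L|\), which gives the \(L\)-independence claim.

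The only point needing care is the boundary convention, and it is not a real obstacle. On a periodic domain \(\dist\) is the torus metric, which is still a metric, so the triangle inequality argument goes through unchanged. This requires only that \(L\) be large enough that balls do not wrap awkwardly, and even without that the inclusion \(B_r(j)\subseteq B_{2r}(i)\) holds. On open boundaries, \(B_r(j)\) is intersected with \(\Lambda_L\), and the inclusion survives the intersection. The proof is therefore essentially a support computation, and I expect no substantive difficulty beyond stating the differentiability assumption on the \(\phi_j\).
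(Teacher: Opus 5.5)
Your proposal is correct and follows the paper's proof essentially verbatim: differentiate the finite sum, keep only the terms with $i\in B_r(j)$, and use the triangle inequality to get $B_r(j)\subseteq B_{2r}(i)$. Your explicit differentiability assumption on the $\phi_j$ and your remarks on periodic and open boundaries are harmless additions that the paper leaves implicit.
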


\begin{proof}
Differentiating the local decomposition gives
\[
    \nabla_{x_i}\log p_L(x)
    =
    \sum_{\substack{j\in\Lambda_L\\ i\in B_r(j)}}
    \nabla_{x_i}\phi_j(x_{B_r(j)}).
\]
If \(i\in B_r(j)\) and \(k\in B_r(j)\), then
\(\dist(i,k)\le \dist(i,j)+\dist(j,k)\le 2r\). Hence every term that contributes to the score at \(i\) depends only on sites in \(B_{2r}(i)\).
\end{proof}

\begin{proposition}[Tweedie formula and covariance response]
\label{prop:tweedie_covariance_appendix}
Let \(X\sim p_L\), \(Y=X+\sigma\varepsilon\), and \(\varepsilon\sim\mathcal N(0,I)\). Define
\[
    p_{L,\sigma}(y)=\int p_L(x)\mathcal N(y;x,\sigma^2 I)\,dx
\]
and the noisy posterior
\[
    \pi^y_{L,\sigma}(dx)
    \propto
    p_L(x)\exp\!\left(-\frac{\|y-x\|_2^2}{2\sigma^2}\right)dx.
\]
Then
\[
    s_{L,\sigma}(y)
    =
    \nabla_y\log p_{L,\sigma}(y)
    =
    \sigma^{-2}\bigl(\E_{\pi^y_{L,\sigma}}[X]-y\bigr).
\]
Moreover, for vector-valued site variables \(X_i\in\R^d\),
\[
    \frac{\partial s_{L,\sigma}(y)_i}{\partial y_j}
    =
    \sigma^{-4}
    \operatorname{Cov}_{\pi^y_{L,\sigma}}(X_i,X_j)
    -
    \sigma^{-2}\mathbf 1_{\{i=j\}}I_d .
\]
\end{proposition}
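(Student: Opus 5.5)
The plan is to differentiate the Gaussian convolution directly. Write $p_{L,\sigma}(y)=(2\pi\sigma^2)^{-n/2}\int p_L(x)\exp(-\|y-x\|_2^2/(2\sigma^2))\,dx$, with $n=d|\Lambda_L|$. Set $Z(y)=\int p_L(x)\exp(-\|y-x\|^2/(2\sigma^2))\,dx$, so that $\pi^y_{L,\sigma}(dx)=Z(y)^{-1}p_L(x)\exp(-\|y-x\|^2/(2\sigma^2))\,dx$.

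\textbf{Step 1: justify differentiation under the integral.} This is the only step with any analytic content. The kernel's $y$-derivatives of orders one and two are $\sigma^{-2}(x-y)$ and $\sigma^{-4}(x-y)(x-y)^\top-\sigma^{-2}I$, each times the Gaussian. On a neighborhood $\|y-y_0\|\le 1$, these are dominated by $C(1+\|x-y_0\|^2)\exp(-\|x-y_0\|^2/(4\sigma^2))\cdot e^{c}$. For instance, one can use $\|y-x\|^2\ge \tfrac12\|x-y_0\|^2-1$. This majorant is bounded in $x$ and hence integrable against the probability density $p_L$. Dominated convergence then gives that $Z$ is $C^2$ with
\[
\nabla Z=\int p_L(x)\,\sigma^{-2}(x-y)\,e^{-\|y-x\|^2/(2\sigma^2)}\,dx .
\]
Also $Z>0$ everywhere, since $p_L$ is a density and the Gaussian is strictly positive. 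No moment assumptions on $p_L$ are needed, which is the point of using the Gaussian tail here.

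\textbf{Step 2: Tweedie's formula.} Since $\nabla_y\log p_{L,\sigma}=\nabla Z/Z$, dividing gives
\[
s_{L,\sigma}(y)=\sigma^{-2}\bigl(\E_{\pi^y_{L,\sigma}}[X]-y\bigr).
\]

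\textbf{Step 3: the Jacobian.} Differentiate $m(y)=\E_{\pi^y}[X]=N(y)/Z(y)$, where $N(y)=\int x\,p_L(x)e^{-\|y-x\|^2/(2\sigma^2)}dx$. Differentiation under the integral is justified exactly as in Step 1; the extra factor $x$ is absorbed by the Gaussian tail. By the quotient rule, the block $(i,j)$ is
\[
\partial m_i/\partial y_j=\sigma^{-2}\bigl(\E[X_iX_j^\top]-\E[X_i]\E[X_j]^\top\bigr)+\sigma^{-2}\bigl(-\E[X_i]y_j^\top+\E[X_i]y_j^\top\bigr),
\]
where the terms involving $y$ cancel. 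The cancellation comes from $\partial_{y_j}$ of the exponent being $\sigma^{-2}(x_j-y_j)$, whose $y_j$ part is constant in $x$ and therefore drops out of a centered expectation. Hence
\[
\partial m_i/\partial y_j=\sigma^{-2}\operatorname{Cov}_{\pi^y}(X_i,X_j).
\]

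\textbf{Step 4: assemble.} Combining with $\partial y_i/\partial y_j=\mathbf 1_{\{i=j\}}I_d$ and the prefactor $\sigma^{-2}$ yields
\[
\frac{\partial s_{L,\sigma}(y)_i}{\partial y_j}=\sigma^{-4}\operatorname{Cov}_{\pi^y_{L,\sigma}}(X_i,X_j)-\sigma^{-2}\mathbf 1_{\{i=j\}}I_d .
\]
This recovers Eq.~\eqref{eq:posterior_cov_lclr} for $i\neq j$.

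The only delicate point is the domination in Step 1. The rest is algebra.
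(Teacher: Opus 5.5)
Your proposal is correct and takes the same route as the paper. You differentiate the Gaussian convolution under the integral to get Tweedie's formula, then differentiate the posterior mean, where the $y_j$-derivative of the log posterior weight is $\sigma^{-2}(x_j-y_j)$, to obtain the covariance term. Your dominated-convergence majorant and explicit quotient-rule calculation fill in details the paper only cites as ``differentiating under the integral'' and the ``standard derivative-of-expectation identity.''
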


\begin{proof}
Differentiating the Gaussian convolution under the integral gives
\[
    \nabla_y p_{L,\sigma}(y)
    =
    \int p_L(x)\frac{x-y}{\sigma^2}
    \mathcal N(y;x,\sigma^2 I)\,dx.
\]
Dividing by \(p_{L,\sigma}(y)\) yields Tweedie's formula.

For the response identity, write \(m_i(y)=\E_{\pi^y_{L,\sigma}}[X_i]\). The posterior log weight has derivative
\[
    \frac{\partial}{\partial y_j}
    \log\!\left[
        p_L(x)\exp\!\left(-\frac{\|y-x\|_2^2}{2\sigma^2}\right)
    \right]
    =
    \sigma^{-2}(x_j-y_j).
\]
Using the standard derivative-of-expectation identity,
\[
    \frac{\partial m_i(y)}{\partial y_j}
    =
    \sigma^{-2}
    \operatorname{Cov}_{\pi^y_{L,\sigma}}(X_i,X_j).
\]
Since \(s_{L,\sigma}(y)_i=\sigma^{-2}(m_i(y)-y_i)\), differentiating once more gives the stated Jacobian formula.
\end{proof}

\begin{assumption}[Uniform noisy-posterior covariance decay]
\label{ass:posterior_cov_decay_appendix}
For a fixed noise level \(\sigma>0\), suppose there are constants \(C_\sigma<\infty\) and \(\mu_\sigma>0\), independent of \(L\), such that for all \(i,j\in\Lambda_L\) and all relevant \(y\),
\[
    \left\|
    \operatorname{Cov}_{\pi^y_{L,\sigma}}(X_i,X_j)
    \right\|_{\rm op}
    \le
    C_\sigma e^{-\mu_\sigma \dist(i,j)} .
\]
\end{assumption}

Under Assumption~\ref{ass:posterior_cov_decay_appendix}, Proposition~\ref{prop:tweedie_covariance_appendix} gives, for \(i\ne j\),
\[
    \left\|
    \frac{\partial s_{L,\sigma}(y)_i}{\partial y_j}
    \right\|_{\rm op}
    \le
    \sigma^{-4}C_\sigma e^{-\mu_\sigma \dist(i,j)} .
\]
Thus the nonlocal response tail of the smoothed score decays uniformly in system size. This is the formal version of the diagnostic principle in the main text: the relevant locality is not just the clean interaction radius, but the correlation length of the noisy posterior induced by Gaussian smoothing.

\subsection{Size-Uniform Local-Marginal Control}
\label{app:local_marginal_proof}

We state the theorem for continuous coordinates. In mixed discrete-continuous experiments, the score is taken with respect to the continuous coordinates conditional on the discrete state, which is the setting used in Section~\ref{sec:fdlf_lclr}.

Let \(\Lambda_L\) be a finite graph with graph distance \(\dist(\cdot,\cdot)\), and let \(x_i\in\R^d\). For a patch \(A\subset\Lambda_L\), define the bounded-Lipschitz distance
\[
    d_{\BL}(\mu,\nu)
    =
    \sup_{\phi\in\BL_1(A)}
    \left|
    \int\phi\,d\mu-\int\phi\,d\nu
    \right|,
\]
where
\[
    \BL_1(A)
    =
    \{\phi:\R^{d|A|}\to\R:
      \|\phi\|_\infty\le 1,\ \Lip(\phi)\le 1\}.
\]
We extend a patch test function to the full space by writing \(\phi(x)=\phi(x_A)\).

Consider the exact and learned reverse diffusions
\[
    dX_t=b_t^L(X_t)\,dt+\sqrt{2}\,dW_t,\qquad
    d\widehat X_t=\widehat b_t^L(\widehat X_t)\,dt+\sqrt{2}\,dW_t,
    \qquad 0\le t\le T .
\]
Their initial laws are \(X_0\sim\rho_0^L\) and
\(\widehat X_0\sim\widehat\rho_0^L\).
The corresponding generators are
\[
    L_t^L f
    =
    \sum_{i\in\Lambda_L} b_{t,i}^L\cdot\nabla_i f
    +
    \sum_{i\in\Lambda_L}\Delta_i f,
\]
and similarly \(\widehat L_t^L\) with \(b^L\) replaced by \(\widehat b^L\). Hence
\[
    (\widehat L_t^L-L_t^L)f
    =
    \sum_{i\in\Lambda_L}
    (\widehat b_{t,i}^L-b_{t,i}^L)\cdot\nabla_i f.
\]
The drift error is the score error multiplied by the standard reverse-diffusion coefficient, so constants from a particular SDE convention can be absorbed into \(b_t^L\).

Let \(P^L_{s,t}\) denote the exact backward semigroup,
\[
    P^L_{s,t}\phi(x)=\E[\phi(X_t^{s,x})],
\]
and write \(\widehat\nu_t^L=\Law(\widehat X_t)\). Let \(P_{L,A}\) and \(\widehat Q_{L,A}\) be the exact and learned terminal marginals on \(A\).

\begin{assumption}[Backward regularity]
\label{ass:appendix_backward_regular}
For each finite \(L\), both reverse processes are well posed on \([0,T]\). For every smooth bounded patch test function \(\phi\), \(u(t,x)=P^L_{t,T}\phi(x)\) solves
\[
    \partial_t u+L_t^L u=0,\qquad u(T,x)=\phi(x),
\]
and Ito's formula applies to \(u(t,\widehat X_t)\), with the resulting martingale having mean zero.
\end{assumption}

\begin{assumption}[Initial local consistency]
\label{ass:appendix_initial_consistency}
For every fixed \(m<\infty\), there exists \(\delta_{0,m}\ge0\), independent of \(L\), such that for every \(A\subset\Lambda_L\) with \(|A|\le m\) and every smooth \(\phi\in\BL_1(A)\),
\[
    \left|
    \E_{\widehat\rho_0^L}[P^L_{0,T}\phi]
    -
    \E_{\rho_0^L}[P^L_{0,T}\phi]
    \right|
    \le \delta_{0,m}.
\]
When the exact and learned reverse processes start from the same noise law, \(\delta_{0,m}=0\).
\end{assumption}

\begin{assumption}[Dynamic quasi-locality]
\label{ass:appendix_dynamic_quasilocal}
For every fixed \(m<\infty\), there are weights \(\omega^L_{A,t}(i)\ge0\) and a constant \(C_{\rm loc}(m,T)<\infty\), independent of \(L\), such that
\[
    \sup_L
    \sup_{\substack{A\subset\Lambda_L\\ |A|\le m}}
    \sup_{t\in[0,T]}
    \sum_{i\in\Lambda_L}\omega^L_{A,t}(i)
    <\infty,
\]
and for every smooth \(\phi\in\BL_1(A)\),
\[
    \|\nabla_i P^L_{t,T}\phi(x)\|
    \le
    C_{\rm loc}(m,T)\,\omega^L_{A,t}(i)
\]
for all \(L,A,t,i,x\). A typical envelope is an exponential light cone,
\[
    \omega^L_{A,t}(i)
    =
    \exp[-\mu(\dist(i,A)-v(T-t))_+],
\]
with uniformly summable weights for fixed patch size.
\end{assumption}

\begin{assumption}[Weighted on-rollout drift error]
\label{ass:appendix_weighted_error}
For every fixed \(m<\infty\), there exists \(\eta_m\ge0\), independent of \(L\), such that for every \(A\subset\Lambda_L\) with \(|A|\le m\),
\[
    \int_0^T
    \E_{\widehat\nu_t^L}
    \left[
    \sum_{i\in\Lambda_L}
    \omega^L_{A,t}(i)
    \|\widehat b^L_{t,i}-b^L_{t,i}\|
    \right]dt
    \le \eta_m .
\]
\end{assumption}

\begin{proposition}[Dual-generator comparison]
\label{prop:dual_generator_appendix}
Assume Assumptions~\ref{ass:appendix_backward_regular} and~\ref{ass:appendix_initial_consistency}. If, for every \(A\) with \(|A|\le m\) and every smooth \(\phi\in\BL_1(A)\),
\[
    \int_0^T
    \left|
    \E_{\widehat\nu_t^L}
    \left[
    (\widehat L_t^L-L_t^L)P^L_{t,T}\phi
    \right]
    \right|dt
    \le \varepsilon_m
\]
with \(\varepsilon_m\) independent of \(L\), then
\[
    \sup_L\sup_{\substack{A\subset\Lambda_L\\ |A|\le m}}
    d_{\BL}(\widehat Q_{L,A},P_{L,A})
    \le
    \delta_{0,m}+\varepsilon_m .
\]
\end{proposition}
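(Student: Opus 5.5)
Our plan is the standard duality (Duhamel) argument: telescope the terminal patch error through the exact backward semigroup evaluated along the learned process. Fix \(L\), a patch \(A\) with \(|A|\le m\), and a smooth \(\phi\in\BL_1(A)\), and set \(u(t,x)=P^L_{t,T}\phi(x)\). By Assumption~\ref{ass:appendix_backward_regular}, \(u\) solves \(\partial_t u+L^L_t u=0\) with \(u(T,\cdot)=\phi\). We apply It\^o's formula to \(u(t,\widehat X_t)\); because \(\widehat X\) is driven by the generator \(\widehat L^L_t\), this gives
\[
u(T,\widehat X_T)-u(0,\widehat X_0)=\int_0^T\bigl(\partial_t u+\widehat L^L_t u\bigr)(t,\widehat X_t)\,dt+M_T
=\int_0^T\bigl(\widehat L^L_t-L^L_t\bigr)u(t,\widehat X_t)\,dt+M_T ,
\]
where \(M_T=\int_0^T\sqrt2\,\nabla u(t,\widehat X_t)\cdot dW_t\). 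By the same assumption, \(M\) has mean zero.

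Taking expectations, and using \(\E[u(T,\widehat X_T)]=\E[\phi(\widehat X_T)]=\int\phi\,d\widehat Q_{L,A}\) and \(\E[u(0,\widehat X_0)]=\E_{\widehat\rho^L_0}[P^L_{0,T}\phi]\), we obtain
\[
\int\phi\,d\widehat Q_{L,A}=\E_{\widehat\rho^L_0}[P^L_{0,T}\phi]+\int_0^T\E_{\widehat\nu^L_t}\bigl[(\widehat L^L_t-L^L_t)P^L_{t,T}\phi\bigr]\,dt .
\]
For the exact process, the Markov property gives \(\int\phi\,dP_{L,A}=\E_{\rho^L_0}[P^L_{0,T}\phi]\); equivalently, this is the same identity with \(\widehat b\) replaced by \(b\), so the error term vanishes. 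Subtracting the two identities and applying the triangle inequality, we bound the initial difference by \(\delta_{0,m}\) via Assumption~\ref{ass:appendix_initial_consistency}. We move the absolute value inside the time integral and bound that term by \(\varepsilon_m\) via the hypothesis. The result is \(|\int\phi\,d\widehat Q_{L,A}-\int\phi\,dP_{L,A}|\le\delta_{0,m}+\varepsilon_m\), uniformly in \(L\) and \(A\).

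The remaining step is to pass from smooth \(\phi\) to the whole class \(\BL_1(A)\), since \(d_{\BL}\) is a supremum over all of it. We will use mollification. For \(\phi\in\BL_1(A)\), the function \(\phi_\epsilon=\phi*\rho_\epsilon\), with \(\rho_\epsilon\) a smooth probability kernel on \(\R^{d|A|}\), is smooth and still satisfies \(\|\phi_\epsilon\|_\infty\le1\) and \(\Lip(\phi_\epsilon)\le1\), so \(\phi_\epsilon\in\BL_1(A)\). Moreover \(\phi_\epsilon\to\phi\) uniformly, because \(|\phi_\epsilon-\phi|\le\epsilon\,\E|Z|\) for \(Z\sim\rho_1\). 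Hence both integrals converge as \(\epsilon\to0\), and the bound transfers to \(\phi\). Taking the supremum over \(\phi\), then over \(A\) with \(|A|\le m\), then over \(L\), gives the claim.

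We expect the main obstacle to be the justification hidden in the It\^o step: that \(u\) is regular enough for It\^o's formula along \(\widehat X\), and that the stochastic integral is a true martingale rather than only a local one. The proposition places exactly this in Assumption~\ref{ass:appendix_backward_regular}, so the proof reduces to invoking it carefully. The other point needing care is that the mollified test functions stay inside the smooth class to which the assumptions apply. Theorem~\ref{thm:informal_lclr} will then follow by bounding \(|(\widehat L^L_t-L^L_t)P^L_{t,T}\phi|\le\sum_i\|\widehat b^L_{t,i}-b^L_{t,i}\|\,\|\nabla_iP^L_{t,T}\phi\|\) and using Assumptions~\ref{ass:appendix_dynamic_quasilocal} and~\ref{ass:appendix_weighted_error}, which give \(\varepsilon_m=C_{\rm loc}(m,T)\eta_m\).
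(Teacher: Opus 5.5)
Your proposal is correct and follows the paper's proof essentially step for step. Both arguments apply It\^o's formula to $u(t,\widehat X_t)$ with $u=P^L_{t,T}\phi$ solving the backward Kolmogorov equation, subtract the exact-process identity, bound the two resulting terms by initial consistency and the hypothesis, and mollify to pass from smooth $\phi$ to all of $\BL_1(A)$. Your uniform estimate $|\phi_\epsilon-\phi|\le\epsilon\,\E|Z|$ makes that final step slightly cleaner than the paper's appeal to dominated convergence.
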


\begin{proof}
Fix \(L,A\), and a smooth \(\phi\in\BL_1(A)\). Set \(u(t,x)=P^L_{t,T}\phi(x)\). By Assumption~\ref{ass:appendix_backward_regular},
\[
    \partial_t u+L_t^L u=0,\qquad u(T,x)=\phi(x).
\]
Ito's formula along the learned process gives
\[
    \E[\phi(\widehat X_{T,A})]
    =
    \E_{\widehat\rho_0^L}[P^L_{0,T}\phi]
    +
    \int_0^T
    \E_{\widehat\nu_t^L}
    [(\widehat L_t^L-L_t^L)P^L_{t,T}\phi]\,dt.
\]
For the exact process,
\[
    \E[\phi(X_{T,A})]=\E_{\rho_0^L}[P^L_{0,T}\phi].
\]
Subtracting the two displays and applying the initial-consistency and dual-generator bounds yields
\[
    \left|
    \E[\phi(\widehat X_{T,A})]-\E[\phi(X_{T,A})]
    \right|
    \le
    \delta_{0,m}+\varepsilon_m.
\]
The same bound extends from smooth \(\phi\in\BL_1(A)\) to all bounded-Lipschitz test functions by convolution with a standard mollifier and dominated convergence. Taking the supremum over \(\phi\in\BL_1(A)\), then over \(A\) and \(L\), proves the claim.
\end{proof}

\begin{theorem}[Size-uniform local-marginal control]
\label{thm:appendix_weighted_local_marginal}
Under Assumptions~\ref{ass:appendix_backward_regular}--\ref{ass:appendix_weighted_error}, for every fixed \(m<\infty\),
\[
    \sup_L\sup_{\substack{A\subset\Lambda_L\\ |A|\le m}}
    d_{\BL}(\widehat Q_{L,A},P_{L,A})
    \le
    \delta_{0,m}+C_{\rm loc}(m,T)\eta_m .
\]
\end{theorem}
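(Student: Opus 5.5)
The plan is to reduce Theorem~\ref{thm:appendix_weighted_local_marginal} to Proposition~\ref{prop:dual_generator_appendix}. Under Assumptions~\ref{ass:appendix_backward_regular} and~\ref{ass:appendix_initial_consistency}, that proposition already gives the bound $\delta_{0,m}+\varepsilon_m$. So it suffices to show that the dual-generator hypothesis holds with $\varepsilon_m=C_{\rm loc}(m,T)\eta_m$, uniformly in $L$ and in $A$ with $|A|\le m$.

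\textbf{Step 1.} Fix $L$, a patch $A$ with $|A|\le m$, and a smooth $\phi\in\BL_1(A)$. Apply the generator-difference identity from the setup to $f=P^L_{t,T}\phi$:
\[
  (\widehat L_t^L-L_t^L)P^L_{t,T}\phi(x)=\sum_{i\in\Lambda_L}\bigl(\widehat b^L_{t,i}(x)-b^L_{t,i}(x)\bigr)\cdot\nabla_i P^L_{t,T}\phi(x).
\]
Since $\Lambda_L$ is finite, the sum is finite for each $L$, so no interchange issues arise at this level.

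\textbf{Step 2.} Bound each term by Cauchy--Schwarz, and the gradient factor by the pointwise bound of Assumption~\ref{ass:appendix_dynamic_quasilocal}. This gives
\[
  \bigl|(\widehat L_t^L-L_t^L)P^L_{t,T}\phi(x)\bigr|\le C_{\rm loc}(m,T)\sum_{i}\omega^L_{A,t}(i)\,\|\widehat b^L_{t,i}(x)-b^L_{t,i}(x)\|.
\]
The bound holds for every $x$, $t$, $L$, $A$ and every admissible $\phi$. The key point is that $C_{\rm loc}$ does not depend on $\phi$ beyond the normalisation $\Lip(\phi)\le1$.

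\textbf{Step 3.} Move the absolute value inside the expectation: $|\E_{\widehat\nu_t^L}[\cdot]|\le\E_{\widehat\nu_t^L}|\cdot|$. Then integrate over $t\in[0,T]$ and apply Assumption~\ref{ass:appendix_weighted_error}. This yields
\[
  \int_0^T\bigl|\E_{\widehat\nu_t^L}[(\widehat L_t^L-L_t^L)P^L_{t,T}\phi]\bigr|\,dt\le C_{\rm loc}(m,T)\,\eta_m.
\]
Both constants are independent of $L$, $A$ and $\phi$. Proposition~\ref{prop:dual_generator_appendix} then applies with $\varepsilon_m=C_{\rm loc}(m,T)\eta_m$, and its mollification step extends the result from smooth to general $\BL_1$ test functions.

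\textbf{Main obstacle.} The one delicate point is measurability and integrability, i.e.\ that $t\mapsto\E_{\widehat\nu_t^L}[\cdots]$ is integrable, so the time integral in the proposition is meaningful. I would handle this by noting that the integrand is dominated by the weighted error, whose time integral is finite by Assumption~\ref{ass:appendix_weighted_error}. Joint measurability in $(t,x)$ is already implicit in the Itô-formula hypothesis of Assumption~\ref{ass:appendix_backward_regular}. The summability condition on $\sum_i\omega^L_{A,t}(i)$ is not needed for the inequality itself. It only guarantees that $\eta_m$ can be finite when the site-wise errors are uniformly bounded, so I would remark on this rather than use it.
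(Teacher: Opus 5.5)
Your proposal is correct and matches the paper's proof essentially step for step: you reduce to Proposition~\ref{prop:dual_generator_appendix}, expand \((\widehat L_t^L-L_t^L)P^L_{t,T}\phi\) as the site sum \(\sum_i(\widehat b^L_{t,i}-b^L_{t,i})\cdot\nabla_iP^L_{t,T}\phi\), bound each gradient by \(C_{\rm loc}(m,T)\,\omega^L_{A,t}(i)\), and integrate in time using Assumption~\ref{ass:appendix_weighted_error} to get \(\varepsilon_m=C_{\rm loc}(m,T)\eta_m\). Your side remarks are accurate: the measurability point is harmless, and the paper's argument, like yours, never uses the summability of the weights \(\omega^L_{A,t}\).
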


\begin{proof}
It suffices to verify the dual-generator condition in Proposition~\ref{prop:dual_generator_appendix}. For smooth \(\phi\in\BL_1(A)\),
\[
    (\widehat L_t^L-L_t^L)P^L_{t,T}\phi
    =
    \sum_{i\in\Lambda_L}
    (\widehat b_{t,i}^L-b_{t,i}^L)\cdot
    \nabla_iP^L_{t,T}\phi.
\]
Therefore, by dynamic quasi-locality,
\[
\begin{aligned}
    \left|
    \E_{\widehat\nu_t^L}
    [(\widehat L_t^L-L_t^L)P^L_{t,T}\phi]
    \right|
    &\le
    \E_{\widehat\nu_t^L}
    \sum_i
    \|\widehat b_{t,i}^L-b_{t,i}^L\|
    \|\nabla_iP^L_{t,T}\phi\|  \\
    &\le
    C_{\rm loc}(m,T)
    \E_{\widehat\nu_t^L}
    \sum_i
    \omega^L_{A,t}(i)
    \|\widehat b_{t,i}^L-b_{t,i}^L\|.
\end{aligned}
\]
After integrating over time, Assumption~\ref{ass:appendix_weighted_error} gives the dual-generator bound
\(\varepsilon_m=C_{\rm loc}(m,T)\eta_m\). Proposition~\ref{prop:dual_generator_appendix} then gives the result.
\end{proof}

The theorem is local by design. It controls all fixed patch marginals uniformly in the ambient size, but it does not imply a volume-uniform bound on global total variation, global KL, or observables whose support grows with \(L\). This is precisely the regime targeted by the experiments: a finite receptive-field score model is expected to transfer local structure when the relevant score-response cone is covered, and not when long-range response remains outside the model.

\section{Benchmark Construction Details}
\label{app:benchmark_construction}

This appendix records the construction details needed to reproduce the FDLF teachers used in the paper. We keep the presentation deliberately compact: the benchmark is meant to expose size-transfer mechanisms, not to be an application simulator.

\subsection{Hard-Valid Mixed Discrete-Continuous Teacher}
\label{app:hard_valid_construction}

The hard-valid mixed benchmark is defined on the open cubic lattice
\(\Lambda_L=\{1,\ldots,L\}^3\). Each site carries
\[
    x_i=(b_i,a_i,c_i),\qquad
    b_i\in\mathcal B,\quad a_i\in\{0,\ldots,K\},\quad c_i\in\R^{d_c}.
\]
The variable \(b_i\) is a local port state over the six coordinate directions
\(\mathcal D_3=\{\pm e_1,\pm e_2,\pm e_3\}\). A site is empty when \(b_i=\emptyset\); in that case \(a_i=0\) and \(c_i=0\). Occupied sites have degree one or two and use type labels \(a_i\in\{1,\ldots,K\}\).

The hard-valid constraint is purely local. For every nearest-neighbor pair \((i,i+d)\),
\[
    d\in b_i \quad\Longleftrightarrow\quad -d\in b_{i+d},
\]
and boundary-facing ports are forbidden. Thus every active port is matched by the adjacent site and no active port exits the box. The implementation samples \(b\) without rejection by tiling \(\Lambda_L\) into \(2\times2\times2\) blocks and choosing from a finite library of empty, single-edge, and two-edge path patterns that do not cross block boundaries. This makes validity exact for every sampled configuration.

Conditioned on \(b\), the type channel is sampled from a finite-range block model. Empty sites are fixed to type zero. Occupied sites first receive a base type field \(a^0\) from local motif logits and block-level themes, and then a finite-depth invertible local type flow
\[
    a=\Phi_{\rm type}(a^0;b)
\]
applies blockwise permutations conditioned on nearby port motifs and block parity. This gives a nontrivial discrete channel while preserving exact probability evaluation.

The continuous channel is conditional on \((b,a)\). On occupied sites we draw
\[
    z_i\sim\mathcal N\!\left(\mu_{b_i,a_i},
    \operatorname{diag}\{\exp(\ell_{b_i,a_i})\}\right),
\]
while empty sites remain fixed at zero. The observable continuous field is
\[
    c
    =S_R\!\left(
        T_{\rm flow}\!\left(T_{\rm pre}(z;b,a);b,a\right);
        \operatorname{occ}(b)
      \right).
\]
The maps \(T_{\rm pre}\) and \(T_{\rm flow}\) are finite-depth checkerboard affine-coupling flows. Their local CNN conditioners use embeddings of \(b\) and \(a\), following the same invertible coupling principle as RealNVP \citep{dinh2017realnvp}. The map \(S_R\) is a finite-depth additive smoother over occupied sites, with a finite cutoff or power-law-weighted finite cutoff. Each layer is exactly invertible with zero log determinant because it updates one parity class from the other parity class. All depths, cutoffs, and conditioner radii are independent of \(L\).

Exact likelihoods and exact clean continuous scores are available by running the inverse map
\[
    c\mapsto S_R^{-1}(c)\mapsto T_{\rm flow}^{-1}(\cdot;b,a)
    \mapsto T_{\rm pre}^{-1}(\cdot;b,a)\mapsto z
\]
and adding the base log density, the discrete log probability, and affine-coupling log determinants. Scores are taken only with respect to the continuous occupied-site coordinates:
\[
    s_L(c;b,a)=\nabla_c \log p_L(c\mid b,a).
\]
The benchmark therefore mixes exact discrete validity with an exact continuous score oracle, which is the diagnostic surface needed for Section~4.2.

\subsection{The 2D Continuous Teacher as a Special Case}
\label{app:2d_special_case}

The 2D teacher used in Section~4.1 is obtained from the same FDLF template by deleting the hard-valid port layer, deleting the nontrivial type-label base and type flow, taking all sites to be occupied, and replacing the open cubic lattice by a periodic square lattice \(\{1,\ldots,L\}^2\). Equivalently, \(b\) and \(a\) are deterministic dummy fields and the model reduces to
\[
    c=S_R\!\left(T_{\rm flow}(T_{\rm pre}(z));\mathbf 1\right),
    \qquad z_i\sim \mathcal N(0,I_{d_c}).
\]
Thus the 2D continuous experiments isolate the score-response mechanism without the extra complications of discrete validity or open 3D boundaries. The hard-valid 3D benchmark checks that the same mechanism persists after those modules are restored.

\section{Implementation Parameters}
\label{app:implementation_parameters}

Because the code and raw experiment outputs are intended for release, we do not duplicate every seed and checkpoint detail here. Table~\ref{tab:app_key_params} lists the parameters needed to identify Figures~\ref{fig:2d_size_lclr}--\ref{fig:ising_lclr} and to understand the plotted regimes.

\begin{table}[ht]
\centering
\caption{Key implementation parameters for the main experiments.}
\label{tab:app_key_params}
\scriptsize
\begin{tabular}{@{}>{\raggedright\arraybackslash}p{0.23\linewidth}>{\raggedright\arraybackslash}p{0.30\linewidth}>{\raggedright\arraybackslash}p{0.39\linewidth}@{}}
\toprule
Experiment & Parameter & Value \\
\midrule
Figure~\ref{fig:2d_size_lclr}: 2D fixed-size sweep
& Lattices
& Train on \(L=16\); evaluate clean score and response on \(L=16,32,48,64\). \\

Figure~\ref{fig:2d_size_lclr}: 2D fixed-size sweep
& Student
& Pure CNN score model with residual-block count \(1\), effective radius \(R_{\rm eff}=4\), trained for 1500 steps. \\

Figure~\ref{fig:2d_size_lclr}: 2D fixed-size sweep
& Teacher families
& Short-range simple, short-range structured, and long-range stress. The code keys are \texttt{simple\_local}, \texttt{complex\_local}, and \texttt{complex\_long}. \\

Figure~\ref{fig:cnn_rf_sweep_lclr}: 2D receptive-field sweep
& Lattices
& Train on \(L=32\); evaluate clean score on \(L=32,64\); response plot on \(L=64\). \\

Figure~\ref{fig:cnn_rf_sweep_lclr}: 2D receptive-field sweep
& Student
& Pure CNN score model; residual-block counts \(1,3,6\), with effective radii \(4,8,14\). Each block contains two \(3\times3\) convolutions, and the input/output projections add one lattice step each. \\

Figure~\ref{fig:cnn_rf_sweep_lclr}: 2D receptive-field sweep
& Teacher families
& Short-range simple, medium-range, and long-range stress. The short-range simple and long-range stress teachers use the same construction as in the fixed-size sweep; the medium-range teacher replaces the short-range structured teacher for the radius diagnostic. \\

Figure~\ref{fig:2d_size_lclr}: 2D training
& Training
& 1500 steps; batch size 16; validation/evaluation batch size 8; response batch size 8; \(\sigma_{\min}=0.005\), \(\sigma_{\max}=1.0\); learning rate \(5\times10^{-4}\). \\

Figure~\ref{fig:cnn_rf_sweep_lclr}: 2D training
& Training
& 1500 steps; batch size 8; validation/evaluation/response batch size 4; \(\sigma_{\min}=0.005\), \(\sigma_{\max}=1.0\); learning rate \(5\times10^{-4}\). \\

2D short-range simple teacher
& Teacher construction
& \(d_c=1\); base smoother depth 0; pre-flow depth 1; flow depth 1. \\

2D short-range structured teacher
& Teacher construction
& \(d_c=1\); base smoother depth 1; cutoff 1; distance power \(-1\); pre-flow depth 1; flow depth 1; nonlinear smoother enabled. \\

2D medium-range teacher
& Teacher construction
& \(d_c=1\); base smoother depth 3; cutoff 5; distance power \(-0.75\); pre-flow depth 2; flow depth 2; nonlinear smoother enabled. \\

2D long-range stress teacher
& Teacher construction
& \(d_c=1\); base smoother depth 6; cutoff 8; distance power \(0\); pre-flow depth 2; flow depth 2; nonlinear smoother enabled; exact global mean stress \(x_i\leftarrow x_i+\gamma(L)L^{-2}\sum_j x_j\), with \(\gamma(L)=0.005L^{3/2}\). \\

Figure~\ref{fig:hard_valid_lclr}: hard-valid mixed 3D check
& Discrete and continuous state
& Open \(L^3\) lattices; \(2^3\)-block hard-valid port base; \(K=4\) occupied type labels; continuous dimension \(d_c=2\) in the current implementation. \\

Figure~\ref{fig:hard_valid_lclr}: hard-valid mixed 3D check
& Teacher flow
& Conditional affine-coupling hidden width 64; embedding dimension 16; scale clip 1.5; default pre-flow depth 0 and flow depth 4; finite-cutoff additive smoother with optional local CNN correction. \\

Figure~\ref{fig:hard_valid_lclr}: hard-valid mixed 3D check
& Evaluation
& Train at \(L=8\); evaluate size extrapolation on larger open boxes up to \(L=40\); report occupied-site clean-score relative RMSE and near/far response diagnostics. \\

Figure~\ref{fig:ising_lclr}: critical Ising stress test
& Prior and noise
& 2D nearest-neighbor Ising prior on a periodic square lattice; sweep \(\beta/\beta_c\) toward one with \(\beta_c=\frac12\log(1+\sqrt 2)\); evaluate noise scales \(\sigma\in\{0.5,1,2,4\}\). \\

Figure~\ref{fig:ising_lclr}: critical Ising stress test
& Score model and probe
& Noise-conditioned local CNN with receptive radius \(R=5\); boundary-response probe at Chebyshev distance \(d=4<R\). \\

Figure~\ref{fig:ising_lclr}: critical Ising stress test
& Exact response
& Exact smoothed-score response computed from the covariance identity in Eq.~\eqref{eq:ising_score_response_lclr}; the plot compares this exact boundary response with the learned local response. \\
\bottomrule
\end{tabular}
\end{table}

The appendix parameter table is intentionally shorter than a full run manifest. Full command-line arguments, random seeds, raw CSV/JSON outputs, and plotting scripts are part of the code release; the table above is included so that the paper remains interpretable even before inspecting the repository.

\subsection{Focused Multi-Seed Receptive-Field Check}
\label{app:multiseed_rf}

Figure~\ref{fig:cnn_rf_sweep_lclr} uses three independent training seeds for the focused receptive-field sweep at \(R_{\rm eff}\in\{4,8,14\}\). Table~\ref{tab:toy2d_multiseed_rf} reports the corresponding numerical values. The teacher is fixed across seeds; the seeds vary model initialization, training minibatches, and evaluation probes.

\begin{table}[ht]
\centering
\caption{Focused multi-seed replication of the 2D CNN receptive-field sweep at \(L=64\). Values are mean \(\pm\) s.e.m. across three independent training seeds.}
\label{tab:toy2d_multiseed_rf}
\footnotesize
\begin{tabular}{lccc}
\toprule
Teacher & \(R_{\rm eff}=4\) & \(R_{\rm eff}=8\) & \(R_{\rm eff}=14\) \\
\midrule
\multicolumn{4}{l}{Clean-score relative RMSE} \\
Short-range simple & 0.176 \(\pm\) 0.0010 & 0.172 \(\pm\) 0.0008 & 0.170 \(\pm\) 0.0005 \\
Medium-range & 0.265 \(\pm\) 0.0029 & 0.163 \(\pm\) 0.0003 & 0.141 \(\pm\) 0.0011 \\
Long-range stress & 0.677 \(\pm\) 0.0088 & 0.534 \(\pm\) 0.0109 & 0.541 \(\pm\) 0.0146 \\
\midrule
\multicolumn{4}{l}{Far-response relative error} \\
Medium-range & 0.691 \(\pm\) 0.0376 & 0.455 \(\pm\) 0.0309 & 0.506 \(\pm\) 0.0459 \\
Long-range stress & 0.995 \(\pm\) 0.0120 & 1.075 \(\pm\) 0.0908 & 0.942 \(\pm\) 0.0442 \\
\bottomrule
\end{tabular}
\end{table}

\subsection{Teacher Names and Generation-Level Local Observables}
\label{app:generation_local_observables}

Table~\ref{tab:teacher_naming} fixes the teacher-family names used in the paper. The code keys are included only to make the released scripts and raw files easy to match to the paper. The 2D figures are generated from the same teacher-construction script, and each plotted run records the exact teacher configuration in its released manifest.

\begin{table}[ht]
\centering
\caption{Teacher-family nomenclature.}
\label{tab:teacher_naming}
\footnotesize
\begin{tabular}{@{}>{\raggedright\arraybackslash}p{0.25\linewidth}>{\raggedright\arraybackslash}p{0.20\linewidth}>{\raggedright\arraybackslash}p{0.42\linewidth}@{}}
\toprule
Paper name & Code key & Role \\
\midrule
Short-range simple & \texttt{simple\_local} & Easy local positive control; appears in Figures~\ref{fig:2d_size_lclr} and~\ref{fig:cnn_rf_sweep_lclr}. \\
Short-range structured & \texttt{complex\_local} & More structured local positive control used in the fixed-architecture size sweep. \\
Medium-range & \texttt{medium\_range} & Receptive-field-limited teacher used in the CNN-radius sweep. \\
Long-range stress & \texttt{complex\_long} & Controlled failure case with response beyond the tested local radii. \\
\bottomrule
\end{tabular}
\end{table}

We also evaluate generated samples using local observables in an additional 2D generation run. For each teacher and lattice size, we compare samples from the learned reverse process to teacher samples using local continuous moments and nearest-neighbor correlations, and report the mean absolute error over these local observables. This diagnostic checks generation-level local marginals in the same spirit as the theorem: it probes fixed local statistics rather than global distributional distance. Table~\ref{tab:generation_local_observables} reports the smallest and largest evaluated sizes from this generation-level check.

\begin{table}[ht]
\centering
\caption{Generation-level local observable errors in the additional 2D generation run. Lower is better.}
\label{tab:generation_local_observables}
\footnotesize
\begin{tabular}{@{}lcc@{}}
\toprule
Teacher family & Local-observable MAE at smallest \(L\) & Local-observable MAE at largest \(L\) \\
\midrule
Short-range simple & 0.079 at \(L=6\) & 0.044 at \(L=40\) \\
Short-range structured & 0.216 at \(L=6\) & 0.141 at \(L=40\) \\
Long-range stress & 0.795 at \(L=6\) & 0.695 at \(L=40\) \\
\bottomrule
\end{tabular}
\end{table}

These generated-sample diagnostics follow the same qualitative split as the score diagnostics: short-range teachers have small or moderate local-observable errors that remain stable with size, while the long-range stress teacher has much larger generation-level local errors. In Figures~\ref{fig:2d_size_lclr} and~\ref{fig:cnn_rf_sweep_lclr}, error bars are s.e.m. across independent training seeds. In the hard-valid mixed benchmark, error bars are standard errors over evaluation batches or finite-difference response repeats for the trained model used in each run.

\end{document}